\documentclass[letterpaper, 10 pt, conference]{ieeeconf}  

\IEEEoverridecommandlockouts                              
\overrideIEEEmargins
\usepackage[pdftex]{graphicx}
\usepackage[cmex10]{amsmath}
\usepackage{amssymb}
\usepackage{import}
\usepackage{bm}
\usepackage{color}
\usepackage{units}
\usepackage{url}
\usepackage{float}
\usepackage{epstopdf}
\usepackage{stmaryrd}
\usepackage{units}
\usepackage{siunitx}
\usepackage{outlines}
\usepackage{boldline}
\usepackage[flushleft]{threeparttable}

\usepackage[doi=false,isbn=false,defernumbers=false,style=ieee,backend=biber,maxbibnames=1000]{biblatex}

\usepackage[utf8]{inputenc}
\addbibresource{main.bib}

\usepackage{mathtools}
\usepackage{color,soul}
\usepackage{ntheorem}











\newcommand{\mnoshow}[1]{}

\newcommand{\figref}[1]{Fig.~\ref{#1}}
\newcommand{\secref}[1]{Section~\ref{#1}}

\newcommand*{\rect}{\textnormal{rect}}

\newcommand{\diag}{\textnormal{diag}}

\DeclarePairedDelimiter\floor{\lfloor}{\rfloor}

\DeclareMathOperator*{\argmin}{arg\,min}

\newtheorem{theorem}{Theorem}
\newtheorem{defn}{Definition}

\title{\LARGE \bf
In Proximity of ReLU DNN, PWA Function, and Explicit MPC}

\author{Saman Fahandezh-Saadi, Masayoshi Tomizuka
\thanks{The authors are with the Department of Mechanical Engineering, at the University of California, Berkeley. {\tt \{samanfahandej, tomizuka\}@berkeley.edu}}%
}

\begin{document}

\maketitle
\thispagestyle{empty}
\pagestyle{empty}

\begin{abstract}
Rectifier (ReLU) deep neural networks (DNN) and their connection with piecewise affine (PWA) functions is analyzed. The paper is an effort to find and study the possibility of representing explicit state feedback policy of model predictive control (MPC) as a ReLU DNN, and vice versa. The complexity and architecture of DNN has been examined through some theorems and discussions. An approximate method has been developed for identification of input-space in ReLU net which results a PWA function over polyhedral regions. Also, inverse multiparametric linear or quadratic programs (mp-LP or mp-QP) has been studied which deals with reconstruction of constraints and cost function given a PWA function.
\end{abstract}

\section{Introduction}
In recent years, deep neural networks (DNN) has had tremendous success in computer vision, speech recognition, and other areas of machine learning \cite{NIPS2012_4824, goodfellow13, go_game, human_learning}. Despite all these unprecedented performances in learning tasks, a theoretical understanding of DNN's architecture, features, and properties is still unexplored. Also, all of these successes are related to the supervised learning and are concerned mostly with function fitting (e.g. classification, function approximation, and regression). In contrast, in reinforcement learning (RL), the concept of feedback makes it hard to study in theory since the statistical properties are dynamic/changing, and also they are hard to train in practice. Another shortcoming of DNN in RL is the absence of theoretical guarantees regarding stability, robustness, and convergence. All these issues need a great deal of consideration.

On the other hand, model predictive control is a powerful tool for control and decision making in robotics and other safety-concerned applications due to its adaptability, robustness, and stability-safety guarantees. In specific, \emph{explicit} MPC allows us to pre-compute the optimal control policy $u_t^* = f(x(t))$ as a function of current state $x(t)$, and deploy it on-line in real-time. This prevents the issue of solving optimization problem in real-time on embedded systems which are typically limited with regard to memory capacity and computation power. But deployment of an explicit MPC suffers from increasing number of regions which grows exponentially (in the worst case) with the number of constraints \cite{ExplicitmpcSurvey, explicit}. This demands significant amount of storage and computational complexity.

Several attempts have been made to address those shortcomings in \emph{explicit} MPC \cite{approxMPC1, approxMPC2, approxMPC3, approxMPC4, approxMPC5, uniquePoly}. But in contrast, regarding deep reinforcement learning, all the attempts were mainly focused on empirical results, and analyzing its architecture only to appear in literature in very recent years. Here we focus more to mention some of these new findings regarding the DNN. Authors in \cite{BoundingCounting} investigate the complexity of DNN by studying the number of polytopic regions that they can attain. The paper also provides a tighter upper-bound (compared to previous bounds \cite{Montufar2017}) on the maximal number of regions that can be partitioned by a ReLU DNN. The paper \cite{ClassificationRegionsFolding} discusses the geometric properties of DNN for classification and how to improve the robustness of such DNN to perturbation by analyzing those properties. In \cite{safeRL} authors present a method that adds stability guarantee to the deep gradient descent algorithm.

\begin{figure}[!t]
    \centering
    \includegraphics[width=\linewidth]{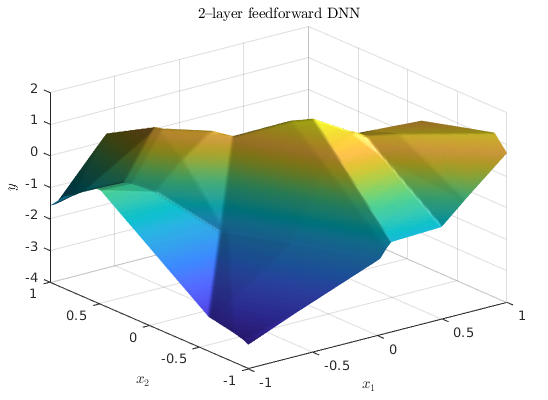}
    \caption{This example shows the level of complexity that a NN can represent. The plot shows how a $2$-layer NN maps input-space $x\in\mathbb{R}^2$ to the output-space $y\in\mathbb{R}$ with $7$ ReLU activation units in each layer (total of $84$ parameters). The network creates a complex continuous PWA function which can be a close approximation of a highly nonlinear function.}
    \label{fig:NN_exp}
\end{figure}

Although these two specific areas of research (deep RL and MPC) have strong connection in (adaptive) optimal control theory \cite{RLMPC}, but from mathematical point of view there is another link between these two: Both ReLU DNN and solution to the mp-LP or mp-QP in \emph{explicit} MPC represent a PWA function on polyhedra. This gives a great amount of motivation to investigate the possibility of reconstructing one from the other in order to benefit from advantages in both approaches.

Since the presumption concerning DNN that they have tens of thousands of parameters (weights and biases) seems reasonable for vision or language applications, but in fact a DNN can represent a very complex function with much less number of parameters. This is a compelling property when we are dealing with representing a control policy as a DNN. As an example, \figref{fig:NN_exp} shows a $2$--layer ReLU network with just $84$ parameters chosen randomly. The plot shows how a very small size network can subdivide the input-space to many polytopes and different affine policy pieces over each region.


In the following, we first provide mathematical definition of ReLU DNN and its structural properties in \secref{secNN}. Then in \secref{secBOTH}, we present a brief overview of existing theorems that represent connection between ReLU nets and PWA functions, and discuss challenges which prevent us to have an explicit association. Also we present a sample-based method in order to identify the underlying PWA function that a ReLU DNN can represent. Finally in \secref{secExpValidation}, we provide a numerical example that examine a simple network and its equivalent PWA function.

\section{Preliminaries and Problem Formulation}
\label{secNN}
\noindent
In this section we define feedforward ReLU DNN and discuss some properties of these models and their ability to map input-space to the complex family of PWA functions.
\subsection{Notation and Definitions}
\begin{defn}\label{def:nn_notations}
A rectifier (ReLU) feedforward network is a layered neural network with $L\in \mathbb{N}$ hidden layers (\emph{depth} of the net) with input and output dimensions $n_0, n_{L+1}\in \mathbb{N}$ respectively. Each hidden layer $l$ is composed of an affine transformation $f_{l}: \mathbb{R}^{n_l} \rightarrow \mathbb{R}^{n_{l+1}}$ followed by a rectifier activation function $\rect(x): x \mapsto \max(x, 0)$
\begin{align*}
    &f_l= W_l h_{l-1} + b_l\\
    &h_l = \rect(f_l) = \max \{f_l,\ 0\},
\end{align*}
where the max is an element-wise function, $W_l\in \mathbb{R}^{n_{l}\times n_{l-1}}$, $b_l\in \mathbb{R}^{n_l}$, $f_l\in \mathbb{R}^{n_l}$, $h_l\in \mathbb{R}^{n_l}$, and $h_0\in \mathbb{R}^{n_0}$ is defined as the input to the network. We call $f_l$ \emph{pre-activation} and $h_l$ \emph{post-activation} functions at hidden layer $l$. The output layer is just a linear transformation $W^{(L+1)}$ and does not count as part of the hidden layers. Finally, any ReLU net with $L > 1$ layers is called \emph{$L$--layer DNN} and can be represented as a function $f:\mathbb{R}^{n_0} \rightarrow \mathbb{R}^{n_{L+1}}$
\begin{align*}
    f = W_{L+1} \circ h_L \circ f_L \circ \hdots \circ h_{2} \circ f_{2} \circ h_1 \circ f_1 
\end{align*}
where $\circ$ denotes function decomposition.
\end{defn}
\begin{figure}[t]
    \centering
    \includegraphics[width=\linewidth]{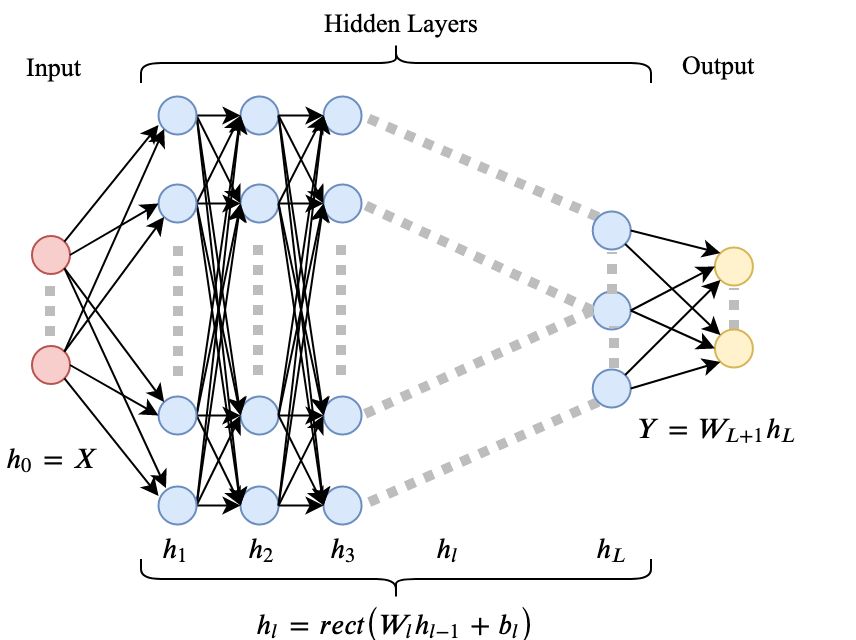}
    \caption{Illustration of $L$--layer ReLU DNN $f:\mathbb{R}^{n_0}\rightarrow \mathbb{R}^{n_{L+1}}$. Depending on application and complexity of function to be approximated, DNN can have an arbitrary number of layers (i.e. depth) and activation units in each layer (i.e width). Note that DNNs are recognized just by the number of \emph{hidden layers}. Also as seen, the output layer is just a linear transformation of last hidden layer without activation mapping.}
    \label{fig:NNillustration}
\end{figure}
\begin{defn}\label{def:relu}
Every layer $l\in \{1,2,\hdots,L\}$ of a ReLU DNN has $n_l$ \emph{activation units} which is called \emph{width} of the layer. Each activation unit receives the rectified weighted sum of the previous post-activation values $h_{l-1}$ plus a bias. The $j$th activation unit in layer $l$ is denoted by $h_{l, j}\in \mathbb{R}$
\begin{align*}
    f_{l, j} &= W^\top_{l,j} h_{l-1} + b_{l,j} \\
    h_{l, j} &= \max\{f_{l, j}, 0\},
\end{align*}
where $W^\top_{l,j}$ and $b_{l,j}$ are the $j$th row and the $j$th element of matrix $W_{l}$ and vector $b_{l}$, respectively.
\end{defn}

An illustration of a $L$--layer ReLU DNN is shown in \figref{fig:NNillustration}. Each blue circle in the figure represent an activation unit. Depending on the structure of DNN it can have any width size for each hidden layer. The total number of parameters for each DNN $\theta = \{W_{1:L+1}, b_{1:L}\}$ can be a basis to compare different architectures by varying depths and widths.
\subsection{ReLU DNN Expressiveness}
Despite the DNN's empirical successes, some fundamental questions about how and why these results are achieved is absent in literature. \emph{Neural net expressivity} is a subject that tries to answer some of these questions such as how the depth, width, and the type of layers impact the function that the network represents, and also how these properties affect its performance. Here we try to provide some of these findings. First we present a set of theorems that deal with these types of questions.  

First, since the post-activation $h(s) := \max\{s, 0\}$ is itself a PWA function and also the structure of ReLU networks is a series of composition of affine and post-activation functions, therefore the result is a PWA function that is defined over the regions of the input-space. This has been stated in the following theorem.
\begin{theorem}
Given a neural network with ReLU activation, the input-space is partitioned into convex polytopes. 
\end{theorem}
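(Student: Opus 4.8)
The plan is to exploit the fact, already noted before the statement, that a ReLU network is a finite composition of affine maps $f_l = W_l h_{l-1} + b_l$ and element-wise rectifiers $h_l = \rect(f_l)$. The key observation is that each rectifier unit behaves as an affine map once we decide whether its pre-activation is nonnegative or not: on the set where $f_{l,j} \ge 0$ it acts as the identity, and on the set where $f_{l,j} \le 0$ it acts as the zero map. I would encode this choice for every unit in the network by an \emph{activation pattern}, i.e. a sign assignment $\sigma_{l,j} \in \{0,1\}$ to each unit, and show that (i) on the region of input-space consistent with a fixed pattern the whole network reduces to a single affine map, and (ii) that region is a finite intersection of half-spaces, hence a convex polytope. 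The argument is carried out by induction on the layer index $l$.

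I would first establish the base case: the first pre-activation $f_1 = W_1 x + b_1$ is affine in the input $x = h_0$, so each unit $j$ contributes the hyperplane $\{x : W^\top_{1,j} x + b_{1,j} = 0\}$. Fixing the sign of every $f_{1,j}$ selects one cell of this hyperplane arrangement, defined by the half-space constraints $W^\top_{1,j} x + b_{1,j} \ge 0$ or $\le 0$; the cell is convex as an intersection of half-spaces, and on it $h_1$ is affine in $x$. For the inductive step, I would suppose that after layers $1,\dots,l-1$ the input-space has been partitioned into convex polytopes on each of which $h_{l-1}$ is affine in $x$. Restricting attention to one such polytope $P$, the pre-activation $f_l = W_l h_{l-1} + b_l$ is a composition of affine maps and therefore affine in $x$ throughout $P$. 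Each unit $j$ of layer $l$ then contributes the constraint $f_{l,j}(x) \ge 0$ or $\le 0$, which cuts $P$ with a half-space; the resulting pieces are still convex, and on each piece $h_l$ is again affine in $x$, preserving the invariant. After the final hidden layer the output map $W_{L+1}$ is affine, so the network is affine on every cell of the resulting partition.

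The step I expect to require the most care is the inductive one, precisely because the separating hyperplanes introduced at layer $l$ are \emph{not} fixed hyperplanes of the ambient input-space: the coefficients of $f_{l,j}$ depend on $h_{l-1}$, which in turn depends on which cell of the previous partition we sit in. In other words, the same unit $j$ of layer $l$ generically defines different hyperplanes over different polytopes $P$, so one cannot simply take the union of all unit hyperplanes globally. The induction circumvents this: because $h_{l-1}$ is affine \emph{within} each fixed $P$, the layer-$l$ boundaries are genuine hyperplanes relative to $P$, and intersecting the convex set $P$ with the associated half-spaces again yields convex sets. Finiteness of the number of units guarantees finitely many cells at each stage, and the union of the closed cells covers $\mathbb{R}^{n_0}$ with pairwise disjoint interiors, which is the claimed partition into convex polytopes.
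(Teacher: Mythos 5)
Your proof is correct and follows essentially the same route as the paper's own sketch: a hyperplane arrangement from the first layer's pre-activations, followed by induction over layers, with the key observation that layer-$l$ boundaries are genuine hyperplanes only relative to each cell of the previous partition (the ``bent'' boundaries the paper illustrates in its figure). You simply carry out in full the induction that the paper defers to its cited reference, so there is nothing to correct.
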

\begin{proof}
The complete proof can be found in \cite{OnTheExpressivePower}. But as sketch of proof, consider the first layer $l=1$; each pre-activation function establishes a hyperplane on the input-space since $f_{1,j} = W_{1,j}x+b_{1,j} = 0$. All such hyperplanes associated to each unit provide a \emph{hyperplane arrangement} which partitions the input-space into polytopes. By induction, it can be shown that this is true for all other layers in DNN. \figref{fig:hyperplanes} illustrates the theorem for a $2$-layer DNN.
\end{proof}

Another important property of ReLU networks is the number of polytopic regions that they can realize on their input-spaces. This helps on two fronts: 1) To understand the complexity of a specific architecture based on the lower- and upper-bound of the number of regions and 2) To design an architecture based on the number of regions that is necessary for an specific application. A \emph{lower-bound} on the number of regions is described in the following theorem
\begin{theorem}
The maximal number of regions computed by a ReLU neural network, with $n_0$ inputs, $L$ hidden layers, and widths $n_l \geq n_0\ \forall l\in \{1,2,\hdots,L\}$, is lower-bounded by
\begin{align}\label{eq:lowerBound}
    \Bigg(\prod_{l=1}^{L-1}\floor{\frac{n_l}{n_0}}^{n_0}\Bigg)\sum_{j=0}^{n_0} \binom{n_L}{j}.
\end{align}
where $\floor{\cdot}$ is the floor function on fractions.
\end{theorem}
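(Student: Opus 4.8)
The plan is to exploit the multiplicative structure of the bound: the factor $\sum_{j=0}^{n_0}\binom{n_L}{j}$ is exactly the maximal number of cells that a \emph{single} ReLU layer of width $n_L$ can carve out of an $n_0$-dimensional input, while each factor $\floor{n_l/n_0}^{n_0}$ is a \emph{replication multiplier} contributed by an intermediate layer. Since the statement only asserts a lower bound on the maximum, I would argue constructively: exhibit a particular choice of weights $\{W_l, b_l\}$ that attains the single-layer count in the last hidden layer and the replication count in every earlier layer, and then combine the two through a composition (space-identification) argument and induction on $L$.

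First I would handle the terminal layer. By the first theorem, the pre-activations of the last hidden layer define $n_L$ hyperplanes on the coordinates fed into that layer; choosing them in \emph{general position} yields the maximal arrangement, whose number of regions is the classical Zaslavsky count $\sum_{j=0}^{n_0}\binom{n_L}{j}$. The relevant ambient dimension is $n_0$ because the earlier layers map the input into an $n_0$-dimensional image. This serves as the base case of the induction.

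Next I would construct the folding (sawtooth) map realized by each intermediate layer $l \in \{1,\dots,L-1\}$. The key observation is that a group of $k := \floor{n_l/n_0}$ ReLU units acting on a single coordinate can be wired, via a suitable row of $W_l$ and entry of $b_l$, to produce a continuous piecewise-affine zigzag mapping $k$ disjoint intervals each surjectively onto a common interval. Partitioning the $n_l \ge n_0$ units into $n_0$ such groups — one per input coordinate — the layer maps the unit cube $[0,1]^{n_0}$ onto itself in a $k^{n_0}$-to-one fashion by folding each axis independently; this is where the exponent $n_0$ and the floor originate, with the leftover units (when $n_0 \nmid n_l$) simply passed through. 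I would isolate this as a lemma: if a layer realizes such a $k^{n_0}$-to-one surjection and the remaining subnetwork computes a PWA function with $N$ regions on the image, then the composition has at least $k^{n_0}\cdot N$ regions, since each of the $k^{n_0}$ preimage copies inherits its own affinely-distinct replica of the downstream partition. Applying this lemma once per intermediate layer multiplies the base count by $\prod_{l=1}^{L-1}\floor{n_l/n_0}^{n_0}$, yielding exactly \eqref{eq:lowerBound}.

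The main obstacle I anticipate is not the counting but the \emph{distinctness bookkeeping}: one must verify rigorously that the folds genuinely replicate regions — that the affine maps on the distinct preimage copies are actually different functions of the original input, so the pieces are not silently merged — and that the foldings of successive layers remain compatible, so that no two of the claimed regions coincide. Establishing surjectivity of each sawtooth onto the full cube (so the downstream arrangement appears in \emph{every} copy) and carefully treating the $n_l - n_0\floor{n_l/n_0}$ leftover units are the delicate points; one must also show that the general-position requirement for the last layer can be met simultaneously with the upstream folding construction.
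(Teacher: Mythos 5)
Your proposal is correct and reconstructs essentially the argument the paper relies on: the paper simply defers to M\"ontufar et al.\ and Pascanu et al., whose proof is exactly your construction — intermediate layers wired as per-coordinate sawtooth foldings that identify $\floor{n_l/n_0}^{n_0}$ preimage copies of a common cube, composed with a final layer in general position realizing the Zaslavsky count $\sum_{j=0}^{n_0}\binom{n_L}{j}$, with the multiplicative replication lemma gluing the two. The delicate points you flag (distinctness of the replicated regions, surjectivity of each fold, and the leftover units when $n_0 \nmid n_l$) are precisely the ones handled by the identification lemma in that reference, so no gap remains.
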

\begin{proof}
Proof can be found in \cite{Montufar2014} or \cite{pascanu}.
\end{proof}
From the hyperplane arrangement it can be shown that the maximal number of regions for any ReLU networks with a total of $N$ activation units is bounded from above by $2^N$ \cite{Montufar2014}. This bound is very loose, and not very useful. But there is also a tighter \emph{upper-bound} on the number of regions,  
\begin{theorem}
The maximal number of regions of a ReLU neural network, with $n_0$ inputs, $L$ hidden layers, and widths $n_l \geq n_0\ \forall l\in \{1,2,\hdots,L\}$, is upper-bounded by
\begin{align}\label{eq:upperBound}
    \sum_{(j_1,\hdots, j_L)\in J}\prod_{l=1}^{L} \binom{n_l}{j_l}
\end{align}
where $J = \{(j_1,\hdots, j_L)\in \mathbb{Z}^L:0 \leq j_l \leq \min \{n_0,n_1-j_1,\hdots,n_{l-1}-j_{l-1}, n_l\},\ \forall l\in[L]\}$
\end{theorem}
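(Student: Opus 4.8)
My plan is to bound the number of regions by counting \emph{activation patterns} and exploiting the theory of hyperplane arrangements layer by layer. The starting point is Zaslavsky's classical bound: an arrangement of $m$ hyperplanes in $\mathbb{R}^d$ cuts the space into at most $\sum_{k=0}^{d}\binom{m}{k}$ regions, with equality in general position. Since the first theorem already tells us the input-space is partitioned by the arrangements $f_{l,j}=0$, I would build the final partition incrementally: let $\mathcal{R}_l$ denote the partition induced by the activation patterns of layers $1,\dots,l$, so that $\mathcal{R}_0=\{\mathbb{R}^{n_0}\}$ and the quantity to bound is $|\mathcal{R}_L|$. Within any region $R\in\mathcal{R}_{l-1}$ the pre-activation $x\mapsto f_l(x)$ is affine, so the $n_l$ level sets $f_{l,j}(x)=0$ form an arrangement that refines $R$.

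The second ingredient is an \emph{effective-dimension} bookkeeping. For a region $R\in\mathcal R_{l-1}$, let $d_{l-1}$ be the dimension of the image of the affine map $x\mapsto h_{l-1}(x)$ restricted to $R$, with $d_0=n_0$. Two facts drive the recursion: first, the arrangement refining $R$ effectively lives in this image, so by Zaslavsky it creates at most $\sum_{k}\binom{n_l}{k}$ pieces with $k\le d_{l-1}$; second, once the rectifier zeroes the inactive coordinates, the dimension handed to layer $l+1$ cannot increase and is further limited by the number of surviving (active) units. The crux is to organize the pieces by an index $j_l\in\{0,\dots,\min(d_{l-1},n_l)\}$ so that (i) at most $\binom{n_l}{j_l}$ pieces carry index $j_l$, and (ii) every such piece has effective output dimension $d_l\le\min(d_{l-1},\,n_l-j_l)$. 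Unrolling this recursion gives $d_l\le\min\{n_0,\,n_1-j_1,\dots,\,n_l-j_l\}$, which is exactly the constraint $j_{l+1}\le\min\{n_0,n_1-j_1,\dots,n_l-j_l,n_{l+1}\}$ appearing in the definition of $J$.

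With this lemma the count telescopes. Bounding $|\mathcal R_l|$ by summing the per-region contributions and substituting the recursion yields the nested sum
\[
|\mathcal R_L|\ \le\ \sum_{j_1}\binom{n_1}{j_1}\sum_{j_2}\binom{n_2}{j_2}\cdots\sum_{j_L}\binom{n_L}{j_L},
\]
where each $j_l$ runs over $0\le j_l\le\min\{n_0,n_1-j_1,\dots,n_{l-1}-j_{l-1},n_l\}$; collecting the indices into the single set $J$ gives precisely $\sum_{(j_1,\dots,j_L)\in J}\prod_{l=1}^{L}\binom{n_l}{j_l}$. As a sanity check, for $L=1$ the set $J$ reduces to $0\le j_1\le\min\{n_0,n_1\}=n_0$ (using $n_1\ge n_0$) and the bound collapses to Zaslavsky's $\sum_{j=0}^{n_0}\binom{n_1}{j}$, which is attained, confirming tightness in the single-layer case. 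The hypothesis $n_l\ge n_0$ guarantees that the input dimension, rather than a layer width, is the binding constraint at the first step.

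I expect the decisive obstacle to be establishing the organizing lemma (i)--(ii): one must show simultaneously that the refinement pieces can be labelled by an index whose population is capped by $\binom{n_l}{j_l}$ \emph{and} that this same index controls the rank of the rectified map feeding the next layer. The counting half is a refinement of Zaslavsky — regions of a generic arrangement are in bijection with subsets of hyperplanes of size at most $d_{l-1}$, which produces the $\binom{n_l}{j_l}$ strata — while the dimension half is a rank argument: writing the restricted network map as a product of the weight matrices with $0/1$ diagonal activation matrices, its rank is limited by the number of active units, hence by $n_l-j_l$. Forcing these two accountings to use the \emph{same} index $j_l$, so that the product structure emerges and the effective dimensions compose correctly across layers, is the delicate point; a secondary subtlety is that Zaslavsky furnishes only an upper bound here, which is consistent since general position maximizes the number of regions and we seek a worst-case estimate.
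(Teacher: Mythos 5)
The paper does not actually prove this theorem---its ``proof'' is a pointer to Theorem~1 of the cited reference---so the only meaningful comparison is with that cited argument, and your sketch reconstructs its architecture faithfully: Zaslavsky's bound applied layer by layer inside each cell of the previous partition, an effective-dimension recursion $d_l \leq \min\{n_0, n_1-j_1,\hdots,n_l-j_l\}$ driven by the rank-killing effect of the rectifier, and the telescoping nested sum over the index set $J$. Your $L=1$ sanity check and the remark that $n_l \geq n_0$ makes the input dimension the binding constraint at the first step are both correct.

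The one genuine gap is the one you flag yourself: the ``organizing lemma'' (i)--(ii) is asserted, not established, and without it the nested sum does not follow. It is true, and the missing argument is short. Fix a cell $R$ of the partition induced by layers $1,\hdots,l-1$ whose image under $h_{l-1}$ spans an affine subspace of dimension $d$, and consider the arrangement of the $n_l$ level sets $f_{l,j}=0$ restricted to that subspace. Each subregion is determined by its sign vector in $\{+,-\}^{n_l}$, so the number of subregions with at most $k$ inactive units is at most $\sum_{j=0}^{k}\binom{n_l}{j}$ (distinct regions carry distinct sign vectors), while Zaslavsky caps the total at $\sum_{j=0}^{\min(d,n_l)}\binom{n_l}{j}$. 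These two counting facts are exactly Hall's condition for the bipartite graph in which a subregion with $\iota$ inactive units may be matched to any subset of the $n_l$ units of size at most $\min(d,\iota)$; hence an injection exists that assigns to each subregion a label $j_l \leq \min(d,n_l)$ with at most $\binom{n_l}{j_l}$ subregions per label and at least $j_l$ inactive units in each. Since the rectifier zeroes the inactive coordinates, the image of such a subregion under $h_l$ lies in a coordinate subspace of dimension $n_l-j_l$, and an affine image cannot exceed dimension $d$, giving $d_l \leq \min(d,\,n_l-j_l)$ and closing the recursion. (The cited reference proves this by a direct charging argument rather than Hall's theorem, but the content is the same.) With that lemma supplied, your unrolling of the recursion into $\sum_{(j_1,\hdots,j_L)\in J}\prod_{l=1}^{L}\binom{n_l}{j_l}$ is correct; you should also note explicitly that the quantity being bounded is the number of activation regions, which refine the linear regions, so the bound transfers.
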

\begin{proof}
  See Theorem 1. in \cite{BoundingCounting}.
\end{proof}
These theoretical backgrounds give us better understanding of how a structure of neural network impacts its performance and also helps us to use some of these properties in order to construct the link with explicit MPC in the following sections.
\begin{figure}[!t]
    \centering
    \includegraphics[width=\linewidth]{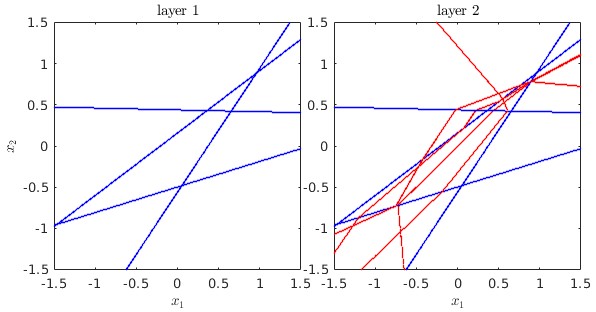}
    \caption{A ReLU DNN subdivides the input-space into polytopes. In fact, each hidden layer divides the input-space from the previous layer $h_{l-1}$, and this recursively subdivides the input-space of the whole network. Here we have a $2$--layer ReLU net with input $x\in \mathbb{R}^2$ and four activation units in each layer. The left plot shows the pre-activation functions $f_1 = W_1x+b_1$ that is equivalent to four hyperplanes in $\mathbb{R}^2$. Hidden units are activated in one side of their corresponding hyperplanes. The right plot shows both hyperplanes from the first and second layer in blue and red respectively. The hyperplanes in the second layer, as seen in the plot, are not straight lines, but they are bent at the first layer boundaries (blue lines). When those hyperplanes pass through different regions partitioned by the first layer, they will be bent. Therefore we have four activation boundary for four units in layer $2$, but they are not straight lines. In the right plot we can see all the regions that the network can partition on the input-space. Also it represents different affine functions over each polytope.}
    \label{fig:hyperplanes}
\end{figure}
\subsection{explicit MPC and PWA functions}
Given a dynamical system, the purpose of a constrained optimal control is to solve an optimization problem with a set of constraints on states $x_t$ and actions $u_t$ in order to find a sequence of actions $u^*_{0:\infty}$ that controls the system to a desired/reference state. We can formulate such problem as an infinite-horizon optimization problem  
\begin{equation}\label{eq:inf_hor}
	\begin{aligned}
		\hspace{0cm}
	J_{\infty}^*(x(0)) = 
	\min_{u_0,u_1,\ldots} & \sum\limits_{t=0}^{\infty} q (x_t, u_t) \\[1ex]
		\text{s.t.}\quad &  x_{t+1} = Ax_t + Bu_t, \\[1ex]
		& x_t\in \mathcal{X},\ u_t \in \mathcal{U}, \\[1ex]
		&  x_0 =  x(0), \\[1ex]
		& \forall t=0, 1, \hdots \ .
	\end{aligned}
\end{equation}
This problem \eqref{eq:inf_hor} cannot be solved easily due to its infinite horizon nature with constraints on states and actions \cite{borrelli2017predictive}; instead model predictive control (i.e. \emph{receding horizon control}) is a suitable approach to follow, which mimics \eqref{eq:inf_hor} by appropriate choice of $p(x_{N})$, $q(x_{k}, {u}_{k})$, and $\mathcal{X}_f$ as the following,
\begin{equation}\label{eq:MPC_eq} 
	\begin{aligned}
	J_{0}^*(x(t)) = 
		\min_{u_{0:(N-1)}} & p(x_{N}) + \sum_{k=0}^{N-1} q(x_{k}, {u}_{k}) \\
		\text{s.t}\quad &  x_{k+1} = Ax_{k} + Bu_{k},\\
		& x_k\in \mathcal{X},\ u_k \in \mathcal{U}, \\
		&x_{N} \in \mathcal{X}_f,\\
	   & x_{0} = x(t), \\
	   & \forall k = 0,\ldots,N-1.
	\end{aligned}
\end{equation}
Equation \eqref{eq:MPC_eq} can be seen as a multiparametric program (mp) in which $x(t)$ is the vector of parameters. In particular for the case of linear and quadratic cost functions with polyhedral constraints, it transpires that the solution to problem \eqref{eq:MPC_eq} is in fact a PWA function of the parameters $u^*(t)= f(x(t))$, an \emph{explicit} solution to the MPC controller. 

In a number of instances we may be interested in the constructing the PWA function corresponding to a ReLU net which is also the solution of a mp-LP/mp-QP problem. This may arise when, for example, we want to measure the suboptimality of a trained network with the solution of an explicit MPC. \emph{Inverse mp-LP/QP} studies this idea, constructing such optimization problems from PWA functions. The following theorem expresses this in detail,
\begin{theorem}
Every continuous piecewise affine function $f: \mathbb{R}^{m}\rightarrow \mathbb{R}^n$ can be obtained as a linear map of the unique explicit solution $\hat{f}(x)$ of multi-parametric linear program in the form of
\begin{equation}\label{eq:mpp}
\begin{aligned}
    \hat{f}(x)\ \in\ & \argmin_z \ \ J(z,x) \\
    & \textnormal{s.t.} \quad (z,x) \in \Omega,
\end{aligned}
\end{equation}
with dimension $\hat{n}$, when $\hat{n} \leq 2n$.
\end{theorem}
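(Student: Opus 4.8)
The plan is to reduce the claim to two classical structural facts about continuous piecewise affine (PWA) functions and then to construct the required program explicitly. The first fact is that every continuous scalar PWA function can be written as the difference of two \emph{convex} PWA functions, each having finitely many affine pieces. Applying this coordinate-wise to $f=(f_1,\dots,f_n)$ yields $f_k = g_k - h_k$ for every $k\in\{1,\dots,n\}$, with $g_k,h_k:\mathbb{R}^m\to\mathbb{R}$ convex and PWA. The second fact is that a convex PWA function is the pointwise maximum of its affine pieces, so $g_k(x)=\max_{i}(a_{k,i}^\top x + c_{k,i})$ and $h_k(x)=\max_{i}(\tilde a_{k,i}^\top x + \tilde c_{k,i})$ for suitable vectors and scalars.

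The construction then writes each pointwise maximum as the optimal value of a scalar LP in epigraph form. I would introduce auxiliary scalars $t_1,\dots,t_n$ and $s_1,\dots,s_n$, collect them into $z=(t_1,\dots,t_n,s_1,\dots,s_n)\in\mathbb{R}^{2n}$, and set
\begin{equation}
J(z,x)=\sum_{k=1}^{n}(t_k+s_k),
\end{equation}
with $\Omega$ defined by the linear inequalities $t_k\geq a_{k,i}^\top x + c_{k,i}$ and $s_k\geq \tilde a_{k,i}^\top x + \tilde c_{k,i}$ over all $k$ and all pieces $i$. Since $J$ and $\Omega$ are linear in $(z,x)$, this is a genuine mp-LP, and for each fixed $x$ it is feasible (take the scalars large) and bounded below.

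Minimizing $J$ decouples across coordinates: each $t_k$ is driven down to the largest of its affine lower bounds and each $s_k$ likewise, so the minimizer is unique with $t_k^{\ast}=g_k(x)$ and $s_k^{\ast}=h_k(x)$. Hence the explicit solution $\hat f(x)=z^{\ast}(x)$ is single-valued, and the linear map $C\in\mathbb{R}^{n\times 2n}$ defined by $(Cz)_k=t_k-s_k$ satisfies $(C\hat f(x))_k = t_k^{\ast}-s_k^{\ast}=g_k(x)-h_k(x)=f_k(x)$, recovering $f$. The decision variable has dimension $\hat n = 2n$ in general, and fewer (for instance $\hat n = n$ when $f$ is already convex in each coordinate, since then $h_k\equiv 0$), which establishes the bound $\hat n \leq 2n$.

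The main obstacle is the first structural fact: establishing, or correctly citing, that an arbitrary continuous PWA function decomposes as a difference of two convex PWA functions with \emph{finitely many} pieces. Uniqueness of the argmin, feasibility, boundedness, and the dimension bookkeeping are then routine; but this decomposition is the non-trivial ingredient and must be handled with care. In particular, one should confirm that the number of affine pieces remains finite, so that $\Omega$ is a true polyhedral constraint set and the resulting explicit solution $\hat f$ is itself a continuous PWA function of the parameter $x$.
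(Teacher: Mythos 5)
Your proposal is correct and follows essentially the same route as the paper, which defers to the constructive proof in the cited reference based on exactly the two ingredients you use: decomposing the PWA function coordinate-wise into a difference of convex PWA functions and realizing each convex piece via an epigraph-form mp-LP. Indeed, the paper's worked example in Section IV (with $f=\gamma-\eta$, $z\in\mathbb{R}^2$, and the recovery map $T=[1\ \ 1]$) is precisely your construction specialized to $n=1$, up to a sign convention on the second block of variables.
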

The proof presented in \cite{Lygeros} is constructive, that means the proof establishes a procedure that results to the formulation of a mp-LP from a PWA function. The proof follows from the fact that every PWA function can be decomposed to two convex function and from there it is straightforward to construct a mp-LP for a convex PWA function. Note that, although the proof is constructive, it is still very hard (or even impossible) to implement it as an algorithm.

Now, referring to \figref{fig:big_picture} we can have a better understanding of the whole picture. Although it is possible to use learning to find an approximation of an explicit MPC policy, yet constructing a deep network from a PWA function needs to be studied.  
\section{Explicit MPC and ReLU DNN}
\label{secBOTH}
\begin{figure}
    \centering
    \includegraphics[width=\linewidth]{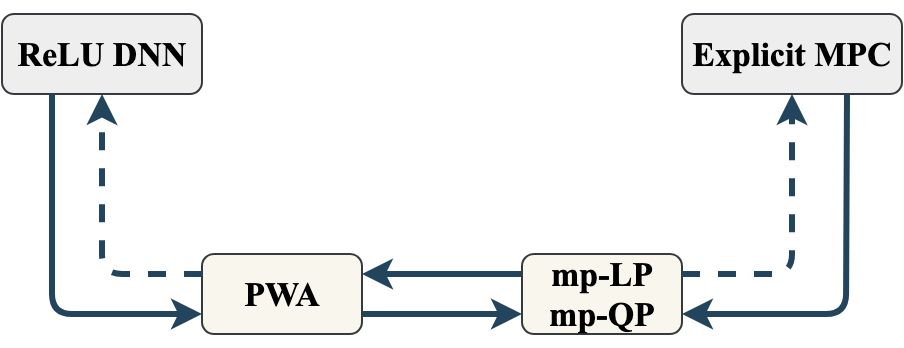}
    \caption{This illustration gives an entire perspective that this paper tries to depict. ReLU DNNs represent PWA function on polyhedra which subdivide the input-space. Assuming that the input to the neural network is the parameter $x(t)$, the network can exactly act as an explicit state feedback policy. The dashed arrows indicate the needs for further study of methods -analytical or approximate- which can reconstruct the mathematical structures of each block from the other in a constructive manner.}
    \label{fig:big_picture}
\end{figure}
In this section we connect the ReLU DNN and explicit MPC through their underlying connection which is PWA functions. As mentioned in previous sections we know that every ReLU DNN has a continuous PWA function representation on the input-space, and vice versa (but not necessary in an explicit closed form, since constructing such a connection is not easy in general). 
\subsection{Identification of Input-Space in ReLU DNN}
In order to identify the different regions partitioned by ReLU NN on the input-space, we present an approximate method here that is an extension of the method introduced in \cite{Montufar2014}. We will show that it is possible to construct each pieces of a PWA function by extending the PWA representation of a shallow network (i.e. $L=1$).

Since every dimension of the output-space can be treated independently, here we assume the construction of a scalar-valued function $f: \Omega \subset \mathbb{R}^{n_0} \rightarrow \mathbb{R}$ from a DNN model (i.e the output-space is scalar $n_{L+1} = 1$), but as mentioned, the proposed method can be applied separately for each dimension in the case of vector-valued DNN models. Any scalar-valued affine function which is defined over its convex region $\Omega_i$ can be written as 
\begin{align}\label{eq:NNPWA}
    f_i(x) = u^\top x + c, \quad x \in \Omega_i,
\end{align}
where $u^\top \in \mathbb{R}^{n_0}$ and $c \in \mathbb{R}$. In order to construct $u^\top$ and $c$ in \eqref{eq:NNPWA}, we first consider a NN with one layer and then extend it to the deep nets.
\subsubsection{Shallow Network}
Note that we can reformulate a scalar rectifier function as  
\begin{align}
    \rect(h) = \mathbf{I}(h)\cdot h,
\end{align}
where $\mathbf{I}(h)$ is an indicator function defined as follows
\begin{align}
    \mathbf{I}(h) = 
    \begin{cases}
        1 \quad h > 0\\
        0 \quad \textnormal{otherwise}
    \end{cases}
\end{align}
Now considering a single layer NN $f: \mathbb{R}^{n_0}\rightarrow \mathbb{R}$, we rewrite it with the help of indicator function as
\begin{align}\label{eq:shallow}
    f(x) = W_2 \diag 
    \Bigg(
        \begin{bmatrix}
            \mathbf{I}(W_{1,1}x+b_{1,1}) \\
            \mathbf{I}(W_{1,2}x+b_{1,2}) \\
            \vdots\\
            \mathbf{I}(W_{1,n_1}x+b_{1,n_1}) \\
        \end{bmatrix}
    \Bigg) (W_{1}x+b_{1}).
\end{align}
Simplifying \eqref{eq:shallow}, $f(x)$ can be written more compactly as
\begin{align}\label{eq:shallow_compact}
    f(x) &= W_2\diag(\mathbf{I}_{f_1}(x)) W_1 x + W_2\diag(\mathbf{I}_{f_1}(x))b_1,
\end{align}
where $\diag(\mathbf{I}_{f_l}(x))$ is the compact form of indicator function for pre-activation $f_l$ in layer $l$. From \eqref{eq:shallow_compact} we can see that given input $x$ weight $u^\top$ and bias $c$ can be computed.
\subsubsection{Deep Network}
Now we can extend the derivation in \eqref{eq:shallow_compact} for deep network. Given an input $x$ from a region $\Omega_i$ we can construct the corresponding weight $u^\top$ and bias $c$ for each affine map $f_i$. The weight is computed by 
\begin{align}\label{eq:u}
    u^\top = W_{L+1} &\diag(\mathbf{I}_{f_{L}}(x)) W_{L}\ \hdots \nonumber\\ &\diag(\mathbf{I}_{f_{2}}(x)) W_{2}\ \diag(\mathbf{I}_{f_{1}}(x))W_1,    
\end{align}
A bias of the affine map $c$ also can be computed similarly
\begin{align}\label{eq:c}
    c &= W_{L+1} \diag(\mathbf{I}_{f_{L}}) W_{L}\ \hdots \ \diag(\mathbf{I}_{f_{2}}) W_{2}\ \diag(\mathbf{I}_{f_{1}}) b_1 \nonumber\\
    &+ W_{L+1} \diag(\mathbf{I}_{f_{L}}) W_{L}\ \hdots \diag(\mathbf{I}_{f_{2}}) b_2 \nonumber\\
    &+ \vdots \nonumber\\
    &+ W_{L+1} \diag(\mathbf{I}_{f_{L}}) b_{L}
\end{align}
Both equations \eqref{eq:u} and \eqref{eq:c} depend on input $x$, so we need to use a (large enough) set of samples from the input-space to be able to identify different affine responses of the output. 

It is worth mentioning that from \eqref{eq:u} and \eqref{eq:c} it is also possible to derive the corresponding affine function for each activation unit up to a specific hidden layer instead of the whole network. This means that any activation unit in any stage of a deep neural network can be written as a PWA function over the input-space of the network. This needs further study, but as a preliminary, we can ask "is there any connection between layers of a neural network and for example the horizon in model predictive control?"

\subsection{Learning DNN with Exact Architecture}
Several literature study the concept of learning approximate MPC through supervised or reinforcement learning process \cite{apprxMPC1, apprxMPC2}. But we can utilize the structure of PWA control policy to further improve the process of learning \cite{apprxMPC3}. The following theorem is the key concept in the process.  
\begin{theorem}\label{exactNN}
Any convex PWA function $f: \mathbb{R}^{n_0}\rightarrow \mathbb{R}$, which also can be formulated as pointwise maximum of $N$ affine functions $f(x) := \max_{i=1,\hdots,N}\ f_i(x)$, can be exactly presented by a ReLU DNN with width $n_l = n_0+1,\ \forall l\in [L]$ and depth $N$.
\end{theorem}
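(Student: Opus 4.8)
The plan is to build the network \emph{sequentially}, accumulating the pointwise maximum one affine piece at a time, so that the $k$-th hidden layer folds in the piece $f_k$ and the final ($N$-th) layer outputs $f(x)=\max_i f_i(x)$; this is precisely what forces the depth to equal $N$. I would proceed by induction on the number of pieces folded in so far, taking as the base case the single affine piece $m_1 := f_1(x)$, which is realized trivially, and which is consistent with the shallow ReLU representation already derived in \eqref{eq:shallow_compact}.

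The engine of the construction is the elementary identity
\begin{equation*}
\max(a,b) = \rect(a-b) + b,
\end{equation*}
which expresses a binary maximum using a single rectifier. Applying it to the running maximum $m_k := \max_{i\le k} f_i(x)$ gives the recursion
\begin{equation*}
m_k = \rect\!\mrb{f_k(x) - m_{k-1}} + m_{k-1}, \qquad m_1 = f_1(x).
\end{equation*}
First I would observe that $f_k(x)-m_{k-1}$ is an affine function of the pair $(x, m_{k-1})$, so its rectification is computed by exactly one activation unit in layer $k$; iterating the recursion from $k=1$ to $N$ then yields $m_N = \max_i f_i(x)$ by a straightforward induction, after which the output layer $W_{L+1}$ extracts $m_N$ as a linear map.

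The width bookkeeping is where the bound $n_l = n_0+1$ originates: each hidden layer must (i) forward a copy of the input $x$ so that the next affine functional $f_{k+1}(x)=u_{k+1}^\top x + c_{k+1}$ can be formed, costing $n_0$ units, and (ii) carry the single scalar accumulator for the running maximum, costing one unit. I would therefore fix the layer invariant that $h_k$ affinely encodes the pair $(x, m_k)$, and verify that one affine-then-$\rect$ map advances $h_{k-1}$ to $h_k$ while respecting this budget of $n_0+1$ neurons.

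The hard part will be the ReLU nonnegativity constraint: because every post-activation is clipped at zero, naively propagating the signed quantities $x$ and $m_{k-1}$ through the layers destroys their negative parts and breaks the affine-encoding invariant. To handle this I would exploit the (bounded) input domain $\mathcal{X}$ relevant to the MPC setting and apply a coordinate shift so that the carried copy of $x$ is nonnegative on that domain; then $\rect$ acts as the identity on it and the encoding stays exactly affine, while the accumulator $m_k$, being the output of a genuine rectifier plus an affine correction, is handled by the same device. The remaining work is routine: confirm that the shift is undone by the final linear layer and that the iterated recursion reproduces $\max_i f_i$ \emph{exactly} rather than approximately.
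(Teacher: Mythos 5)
The paper does not actually prove this theorem --- its ``proof'' is a pointer to Theorem 2 of \cite{UniversalFunctionApprDNN} --- and your construction is, in outline, exactly the one used in that reference: forward a (shifted, hence nonnegative) copy of $x$ through each hidden layer using $n_0$ rectifiers that act as the identity, and spend the one remaining unit per layer folding one more affine piece into the running maximum, so that depth $N$ and width $n_0+1$ come out as you describe. Your identification of the nonnegativity of the carried copy of $x$ as the crux, resolved by restricting to a compact domain and shifting coordinates, is also faithful to the source, which works on a cube; it is worth noting that the theorem as stated on all of $\mathbb{R}^{n_0}$ silently needs this restriction, since carrying an unbounded signed coordinate through a ReLU layer costs two units rather than one.

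One step of your recursion does break as written. With $m_k = \rect(f_k(x)-m_{k-1}) + m_{k-1}$, the layer-$k$ accumulator unit holds $a_k = \rect(f_k(x)-m_{k-1})$, and then $m_k = a_k + m_{k-1}$ is \emph{not} an affine function of the pair $(x,a_k)$ once $k\geq 3$, because $m_{k-1}$ is by then a genuinely nonaffine PWA function of $x$; your stated layer invariant (that $h_k$ affinely encodes $(x,m_k)$) therefore fails to propagate. The fix is the other orientation of the same identity, $m_k = \rect(m_{k-1}-f_k(x)) + f_k(x)$: the residual term $f_k(x)$ is affine in the forwarded copy of $x$, so $m_k$ is affine in the layer-$k$ post-activations and the induction closes. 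With that sign flip, and the compact-domain caveat made explicit, your argument is a complete proof of the cited result.
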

\begin{proof}    
See Theorem 2 in \cite{UniversalFunctionApprDNN}.
\end{proof}
Depending on the dimension of control input $u\in \mathbb{R}^m$, it may be needed to train up to $2m$ networks. In fact, every element in the control input vector can be treated separately. Each element $u^*(x(t)): \mathbb{R}^{n_0}\rightarrow \mathbb{R}$ is a PWA function which needs to be decomposed into the difference of two convex PWA functions. Finally, theorem \ref{exactNN} gives an exact design structure for each network. And presumably, this should result a better learning (smaller loss value, faster convergence, better accuracy), which ultimately impacts the performance of the controller that the trained network substitutes.
\section{Experimental validation}
\label{secExpValidation}
\begin{figure}
    \centering
    \includegraphics[width=\linewidth]{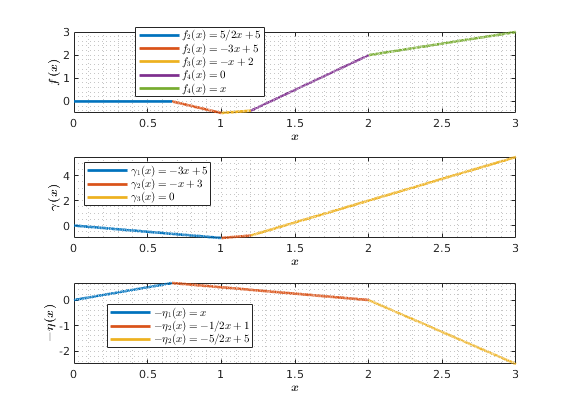}
    \caption{The plots show a scalar PWA function and its decomposition to two convex functions, $f(x) = \gamma(x) - \eta(x)$.}
    \label{fig:1d_exp}
\end{figure}
In this section we present a simple example (as a proof of concept) to illustrate the way to construct a mp-LP from a ReLU DNN. We examine a $2$-layer feedforward net with $n_1 = n_2 = 2$ (i.e. total of four activation units) and one dimensional input/output $x,\ y\in \mathbb{R}$. The two hidden layers are constructed as follows
\begin{align*}
    h_1 = \begin{bmatrix} h_{1,1} \\ h_{1,2}\end{bmatrix} 
        &= \max\{0, W_1x+b_1\} \\ 
        &= \max\bigg\{0, 
        \begin{bmatrix} -\nicefrac{3}{2}\\ 2\end{bmatrix} x + 
        \begin{bmatrix} 3\\ -2\end{bmatrix}
        \bigg\}
\end{align*}
\begin{align*}
    h_2 = \begin{bmatrix} h_{2,1} \\ h_{2,2}\end{bmatrix}
        &= \max\{0, W_2h_1+b_2\} \\
        &= \max\bigg\{0, 
        \begin{bmatrix} -1 & -1\\ \nicefrac{1}{2} & -1\end{bmatrix} h_1 + 
        \begin{bmatrix} 1\\ 2\end{bmatrix}
        \bigg\},
\end{align*}
and the linear map for the output layer is
\begin{align*}
    y = W_3h_2 = \begin{bmatrix} 1 & -1\end{bmatrix} h_2.
\end{align*}
The PWA function equivalent to the above feedforward network is
\begin{align}\label{eq:pwa_exp}
    y = f(x) = 
    \begin{cases}
        0 \quad &x \leq \frac{2}{3} \\
        -\frac{3}{2}x+1 \quad &\frac{2}{3} \leq x \leq 1\\
        \frac{1}{2}x-1 \quad &1 \leq x \leq \frac{6}{5}\\
        3x-4 \quad &\frac{6}{5} \leq x \leq 2\\
        x \quad &2 \leq x
    \end{cases}
\end{align}
since the PWA function \eqref{eq:pwa_exp} is not convex nor concave, we can decompose it into the difference of two convex functions $\gamma(x)$ and $\eta(x)$ as follows
\begin{align*}
    \gamma(x) &= 
    \begin{cases}
        -x & x \leq 1 \\
         x-2  & 1 \leq x \leq \frac{6}{5}\\
        \frac{7}{2}x-5 & \frac{6}{5} \leq x
    \end{cases}\\
    \eta(x) &= 
    \begin{cases}
        -x & x \leq \frac{2}{3}\\
        \frac{1}{2}x-1 & \frac{2}{3} \leq x \leq 2\\
        \frac{5}{2}x-5 & 2 \leq x
    \end{cases}
\end{align*}
Then we can construct the mp-LP counterpart that its solution is the same as PWA function \eqref{eq:pwa_exp}. Introducing decision variable $z\in \mathbb{R}^2$, we can write
\begin{align}\label{eq:mplp1d}
    \begin{split}
    J^*(x) = \min_{z\in \mathbb{R}^2} \ \  
        \begin{bmatrix}1 & -1\end{bmatrix}
        \begin{bmatrix}z_1\\ z_2\end{bmatrix}\\
        \text{s.t. }  -x \leq z_1 &\quad x \geq z_2 \\
                      x-2 \leq z_1 &\quad -\frac{1}{2}x+1 \geq z_2 \\
                      \frac{7}{2}x-5 \leq z_1 &\quad -\frac{5}{2}x+5 \geq z_2, \\
                      x \in [0, 3] &
    \end{split}
\end{align}
and then we can construct $f(x)$ with linear map $T$ as
\begin{align*}
    f(x) = T\hat{f}(x) = \begin{bmatrix}1 & 1\end{bmatrix}\hat{f}(x)
\end{align*}
In fact the explicit solutions to the mp-LP are $z_1^* = \gamma(x)$, $z_2^* = -\eta(x)$ and $\hat{f}(x) = \begin{bmatrix}\gamma(x) & -\eta(x)\end{bmatrix}^T$, which exactly follows the constructive proof in \cite{Lygeros}.
\begin{figure}
    \centering
    \includegraphics[width=\linewidth]{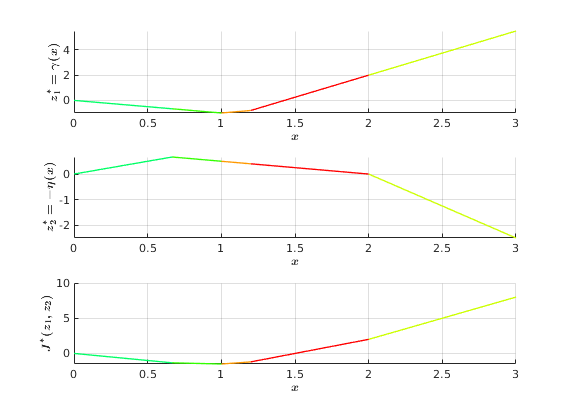}
    \caption{The plots show the explicit solution to the mp-LP \eqref{eq:mplp1d} using MPT3 toolbox \cite{MPT3}. \textbf{Top \& center:} show plots of the optimizers $z_1^*=\gamma(x)$ and $z_2^*=-\eta(x)$ which both are functions of the parameter $x$. the solution exactly results the PWA function \eqref{eq:pwa_exp}. \textbf{bottom:} The plot depicts the optimal value $J^*(x)$ which is a function of parameter $x$ and also is convex and PWA as we know from the theory of mp-LP (corollary $11.5$ in \cite{borrelli2017predictive}).}
    \label{fig:1d_exp2}
\end{figure}

\section{Conclusion}
\label{secConclusion}
We presented an overview of ReLU deep neural networks; and discussed several structural properties of such models which are key concepts of using a ReLU network as an explicit state feedback policy for a model predictive controller. Specifically, we argued since any ReLU network models a PWA function on polyhedra, it would be a perfect choice to use a ReLU network instead of state feedback policy computed by an explicit MPC procedure considering storage and execution complexity of such controllers in real-time. We also presented a sample-based method that identifies different affine pieces of a ReLU networks. For future work, alongside further development of some initial findings in this paper, other very recently new ideas such as representing ReLU DNN as a mixed-integer linear problem \cite{BoundingCounting} can be the subject of further investigation. 

\section{Acknowledgement}
\label{secAcknowledgement}
This work was partially supported by the NSF Graduate Research Fellowship under Grant No. DGE 1106400 awarded to the first author, and the Cheryl and John Neerhout, Jr. Distinguished chair fund.


\section*{References}
{
\printbibliography[heading=none, resetnumbers=true]
}

\end{document}